\newcommand{\ejp}[1]{{\color{red}#1 }}
\newcommand{\given}{\,\,|\,\,}
\newcommand{\n}{|\!|}
\newcommand{\R}{\mathbb{R}}
\newcommand{\tr}{\mbox{Tr}}
\theoremstyle{plain}
\newtheorem{theorem}{Theorem}
\journal{Ambient Intelligence and Humanized Computing}
\author[cwi,tud]{Abdolrahman Khoshrou\corref{cor1}\fnref{fn1}}
\ead{a.khoshrou@cwi.nl}
\author[cwi]{Eric J. Pauwels\fnref{fn2}} %
\address[cwi]{Centrum Wiskunde \& Informatica, Science Park 123, 1098 XG, Amsterdam, The Netherlands}
\address[tud]{Department of Mathematics and Computer Science, Delft University of Technology, The Netherlands}
\begin{document}
\begin{frontmatter}
\title{
Regularisation for PCA- and SVD-type 
matrix factorisations 
}
\begin{abstract}
Singular Value Decomposition (SVD) and its close relative, Principal Component Analysis (PCA), are well-known linear matrix decomposition techniques that are widely used in applications such as dimension reduction and clustering. 
However, an 
important limitation of  SVD/PCA is its sensitivity to 
noise in the input data. In this paper, we take another look at the problem 
of regularisation and show that different 
formulations of the minimisation problem  lead to 
qualitatively different solutions. 


\end{abstract}
\begin{keyword}
Singular value decomposition (SVD), Principal component analysis (PCA), matrix factorisation, regularisation, 
dimensionality reduction,  
graph Laplacian, feature manifold. 
\end{keyword}
\end{frontmatter}




\section{Introduction and Motivation} \label{sec:intro}
\subsection{Introduction and Related Work}
Singular Value Decomposition (SVD)
 and its close relative, Principal Component Analysis (PCA), 
are well-known linear matrix factorisation  techniques that are widely used in applications as varied as dimension reduction 
and clustering, 
matrix completion~\citep{Davenport_2016} (e.g. for recommender systems),  dictionary learning~\citep{tovsic2011dictionary} and time series analysis~\citep{khoshrou2018data}.
In a surprising turn of events, (deep) matrix factorisation 
also plays a role in the implicit regularisation that 
enables 
acceptable generalisation 
in deep learning~\citep{gunasekar2017implicit}.

In their abstract version, SVD and PCA  amount to two different 
but related types of matrix factorisation.  
More precisely, given a general (data) matrix $A$, the aim is 
to approximate it as a product of simpler 
(i.e. lower-rank) matrices. 
Specifically: 
\begin{itemize}
    \item PCA-type decomposition: $A \approx P Q^T $  
    where  the columns of $Q$ are orthonormal, i.e. $Q^T Q = I$;
    \item SVD-type decomposition: $ A \approx  PBQ^T $
    where $B$ is diagonal, while $P^TP = I, \, \,Q^TQ =I$.
\end{itemize}
The approximation in the above equations is 
measured in terms of the Frobenius (matrix) norm which 
for an arbitrary matrix $X \in \R^{n\times m}$ is 
defined as: 
\begin{equation}
\n X  \n_F^2 = \sum_{i=1}^n \sum_{j = 1}^m  x_{ij}^2 = \tr(XX^T) = \tr(X^TX) = \n X^T  \n_F^2 .
\label{eq:Frobenius}
\end{equation}
(In the remainder of the paper, we will drop the subscript $F$).

%

Although these factorisation techniques are both conceptually 
simple and effective, it is well-known that they are 
sensitive to noise and outliers in the input data. 
As a consequence,  some modifications of the original 
algorithms have been proposed to alleviate the effect of 
these disturbances~\citep{brooks2013pure,kwak2013principal}. 
Candes et al.~\citep{candes2011robust} introduce  {\it Robust PCA (RPCA)} which aims to 
separate signal from outliers by decomposing  any given matrix into the sum of a low-rank approximation and a sparse matrix of outliers.
An extension of this work for inexact recovery of the data is presented in~\citep{zhou2010stable}. 
Another example of sparse PCA using low rank approximation is proposed in~\citep{shen2008sparse}. 

Adding a regularisation term is another 
versatile  
way to tackle the problem of noisy input.  For instance,  
Dumitrescu et al.~\citep{dumitrescu2017regularized} 
show how a regularized version of K-SVD algorithm can be adapted to the Dictionary Learning (DL) problem. 
However, the presence of noise in the input is not the only 
reason to invoke regularisation.
Recent research~\citep{jin2015low} shows that 
in many real data sets, it is  
not only the observed data that lie on a (non-)linear low dimensional manifold, but this 
also applies to the features.
He et al.~\citep{he2019graph} 
point out that if the columns of the matrix $A$ 
are interpreted as data points, then the rows are features. 
The neighbourhood structure of both the data points 
and the features give rise to distinct graphs (the 
so-called data 
and the feature graph) and hence,  to corresponding 
graph 
Laplacians ($L_d$ and $L_f$ respectively).  
The resulting regularised PCA is referred to as 
the {\it  graph-dual Laplacian PCA }
(gDLPCA) and for a given 
data matrix $A$, is obtained 
by minimising the functional: 

\begin{equation}
J(V,Y) = \n A - VY\n^2 
     + \alpha \, \tr(V^T L_d V) 
     + \beta \, \tr(YL_f Y^T) 
     \quad \quad \mbox{ subject to \,} V^TV = I
     \label{eq:J_He_1}
\end{equation}
The ability of the 
graph dual regularization technique
to incorporate both data and feature structure
has deservedly attracted considerable attention in 
dimensionality 
reduction applications~\citep{yin2015dual,shahid2016fast, he2019graph}.

In the present paper, we 
take the functional.~\eqref{eq:J_He_1} as a starting 
point and investigate the two factorisation approaches 
mentioned above (invoking eq.~\eqref{eq:Frobenius} 
to recast the trace as a norm): 

\begin{itemize}
    \item PCA-type decomposition ($A \approx PQ^T$) by minimising the regularisation  functional:
    
    \begin{equation}
     \n A - PQ^T\n^2 + \lambda\, \n DP \n^2  + \mu\,  \n GQ \n^2 
   \label{eq:pca_fac}
   \end{equation}
    
    \item SVD-type decomposition ($ A \approx  PBQ^T $) 
    by minimising the regularisation functional:
     \begin{equation}
      \n A - PBQ^T\n^2 + \lambda\, \n DP \n^2  + 
   \mu\, \n GQ \n^2 
   \label{eq:svd_fac}
   \end{equation}
  
\end{itemize}
The minimisation of the  functional~\eqref{eq:pca_fac} 
was discussed in \citep{he2019graph}, but the proposed 
solution contains an error which we correct in this paper. 
In addition, we also provide an algorithm to solve functional~\eqref{eq:svd_fac}, which somewhat surprisingly 
is quite different from the one for~\eqref{eq:pca_fac}.

The remainder of this paper is organised as follows: 
We finalise this section by recapitulating some important 
facts facts about SVD. 
In section~\ref{sct:pca_type_reg} 
and \ref{sct:svd_type_reg} we derive an algorithm 
for minimisation of the regularised version of PCA-type 
and SVD-type
factorisation, respectively. 
In section~\ref{sct:comp_aspects} how gradient descent 
can be implemented by drawing on some elementary facts 
from Lie-group theory.  Finally, we conclude 
by giving  some pointers to potential extensions.


\subsection{Brief recap of Singular Value Decomposition (SVD)}
For the sake of completeness, we first recall the well-known  SVD result; for more details we refer to 
standard textbooks such as~\citep{strang1993introduction}\citep{Horn1985}.
%
\begin{theorem}[{\bf Singular Value Decomposition, SVD}] 
\label{thm:svd_1}
Any real-valued $n\times m$  matrix $A$ can be factorized into the product of three matrices: 
\begin{equation}
   A = U S V^T   \quad \quad \mbox{where}   \quad 
   U \in {\cal O }(n) \quad  \mbox{and} \quad  V  \in {\cal O }(m)  \quad 
   \mbox{are orthonormal, }
   \label{eq:svd_def}
\end{equation}
and $S$ is an $n\times m$ diagonal matrix where the elements on the main
``diagonal'' (so-called {\it singular values} ) are non-negative (i.e. $\sigma_i:=S_{ii} \geq 0$ for $1 \leq i \leq 
\min(n,m)$). \\
Assuming that the rank  $rk(A) = r \leq \min(n,m)$,  we can sort the 
singular values such that
$$\sigma_1 \geq \sigma_2 \geq \ldots \geq \sigma_r > 0 = 
\sigma_{r+1} = \ldots = \sigma_{\min(n,m)}$$
and recast eq.~\eqref{eq:svd_def} as 
\begin{equation}
     A  = \sum\limits_{i = 1}^r \sigma_i U_i V_i^T    \quad \quad 
     \mbox{where $U_i, V_i$ are the $i$-th columns of $U$ and $V$, 
     respectively.}
     \label{eq:svd_def_2}
\end{equation}
For the singular values sorted as above, 
we introduce the short-hand 
notation $U_{(1:k)}$  and $V_{(1:k)}$ to denote the matrix 
comprising the first $k$ {\it columns} of $U$ and $V$, 
respectively: 
$$
U_{(1:k)}  :=  [U_1, U_2, \ldots, U_k] \quad\quad 
\mbox{and} \quad \quad 
V_{(1:k)}  :=  [V_1, V_2, \ldots, V_k] .
$$
In this notation, eq.~\eqref{eq:svd_def_2} can be 
expressed concisely as: 
\begin{equation}
    A = U_{(1:r)} \> diag(\sigma_1, \ldots , \sigma_r) 
    \> V_{(1:r)}^T. 
    \label{eq:svd_def_3}
\end{equation}

\qed
\end{theorem}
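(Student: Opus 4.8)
The plan is to derive the decomposition from the spectral theorem for real symmetric matrices, applied to the Gram matrix $A^T A \in \R^{m\times m}$. This matrix is symmetric and positive semi-definite, since $x^T (A^T A) x = \n Ax \n^2 \geq 0$ for every $x$; hence the spectral theorem supplies an orthonormal basis $V_1, \ldots, V_m$ of $\R^m$ consisting of eigenvectors of $A^T A$, with non-negative eigenvalues that we may sort in decreasing order $\lambda_1 \geq \cdots \geq \lambda_m \geq 0$. Set $\sigma_i := \sqrt{\lambda_i}$ and let $r$ be the number of strictly positive $\lambda_i$; this reproduces the ordering $\sigma_1 \geq \cdots \geq \sigma_r > 0 = \sigma_{r+1} = \cdots$ claimed in the statement. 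Observe that $\lambda_i = 0$ forces $\n A V_i \n^2 = V_i^T A^T A V_i = 0$, hence $A V_i = 0$ for $i > r$; in particular $r = rk(A)$.

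Next I would manufacture the left singular vectors. For $1 \leq i \leq r$ put $U_i := A V_i / \sigma_i \in \R^n$. A one-line computation gives $U_i^T U_j = (V_i^T A^T A V_j)/(\sigma_i \sigma_j) = \lambda_j\,(V_i^T V_j)/(\sigma_i\sigma_j) = \delta_{ij}$, so $\{U_1, \ldots, U_r\}$ is orthonormal in $\R^n$; since $r \leq \min(n,m) \leq n$, there is room to enlarge it (e.g.\ by Gram--Schmidt on a complementary subspace) to a full orthonormal basis $U_1, \ldots, U_n$ of $\R^n$. Now assemble $U := [U_1, \ldots, U_n] \in \mathcal{O}(n)$ and $V := [V_1, \ldots, V_m] \in \mathcal{O}(m)$, and let $S \in \R^{n \times m}$ be the ``diagonal'' matrix with $S_{ii} = \sigma_i$ for $1 \le i \le r$ and all other entries zero.

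It then remains only to verify $A = U S V^T$. Since $V$ is orthogonal this is equivalent to $A V = U S$, and as both sides are $n\times m$ matrices it suffices to compare columns, i.e.\ the action on each $V_i$: for $i \le r$ one has $A V_i = \sigma_i U_i$ by construction of $U_i$, which is precisely the $i$-th column of $U S$; for $i > r$ one has $A V_i = 0$, matching the zero $i$-th column of $S$. Hence $AV = US$, and right-multiplying by $V^T = V^{-1}$ yields eq.~\eqref{eq:svd_def}. Expanding $U S V^T$ column-by-column then gives the dyadic form $A = \sum_{i=1}^{r} \sigma_i U_i V_i^T$ of eq.~\eqref{eq:svd_def_2}, and retaining only the first $r$ columns of $U$ and $V$ produces the compact form eq.~\eqref{eq:svd_def_3}.

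These computations are routine; the step warranting the most care is the bookkeeping around the kernel, namely confirming that exactly $r = rk(A)$ eigenvalues are nonzero — so that the extra basis vectors $U_{r+1}, \ldots, U_n$ legitimately correspond to zero singular values — and that $r \leq \min(n,m)$ indeed leaves enough room to extend $\{U_i\}_{i\le r}$ to an orthonormal basis of $\R^n$. As an independent check one could instead apply the spectral theorem directly to the symmetric $(n+m)\times(n+m)$ matrix having $A$ and $A^T$ in its off-diagonal blocks and zeros in its diagonal blocks, whose nonzero eigenvalues are $\pm \sigma_i$; but the Gram-matrix argument above is the more elementary of the two.
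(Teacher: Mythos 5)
Your proof is correct and complete. Note that the paper itself does not prove this theorem at all --- it merely recalls the result and points to standard textbooks --- so there is no in-paper argument to compare against; what you have written is precisely the canonical Gram-matrix proof those references give: apply the spectral theorem to $A^TA$, set $U_i = AV_i/\sigma_i$ on the nonzero eigenvalues, and extend to an orthonormal basis of $\R^n$. The one step you rightly flag as needing care, $r = rk(A)$, is clean because $\ker(A^TA) = \ker(A)$ (from $\n Ax\n^2 = x^TA^TAx$), so the number of nonzero eigenvalues equals $m - \dim\ker(A) = rk(A)$ by rank--nullity.
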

To appreciate the significance of Theorem~\ref{thm:svd_1}, it 
is helpful to   
highlight its geometric interpretation. 
 Recall that any  $n\times m$ matrix $A$ gives rise to a 
 corresponding linear transformation 
  $ A:  \R^m \longrightarrow \R^n  $
    that maps the standard basis in $\R^m$ into the columns of $A$:
    $$   A \mathbf{e}_k  = A_k  \quad \quad 
    \mbox{where} \quad \mathbf{e}_k = {(0, 0, \ldots,0, 1,0,  \ldots, 0)^T}.
    $$
    Roughly speaking, the SVD theorem therefore tells us that it is always possible to select  
    an {\it orthonormal } basis in $\R^m$ (columns of $V$) that is 
    mapped (up to non-negative scaling factors, i.e. 
    the singular values) into an 
    {\it orthonormal} basis in $\R^n$ (columns of $U$). This is immediately 
    obvious from eq.~\eqref{eq:svd_def_2}:
    
    $$ A V_{\ell}  =  \sum\limits_{k = 1}^r \sigma_kU_k V_k^T   V_{\ell}   
    = \sum\limits_{k = 1}^r \sigma_k U_k \delta_{k\ell}  
    =  \sigma_{\ell} U_{\ell}. 
    $$
where $\delta_{k \ell }$ is a Kronecker delta function.  
It is worth noting that insisting on the orthogonality of $V$ ($V^TV = I$)
is not restrictive. 
Indeed, a linear transformation is completely and uniquely determined by specifying its 
effect on any basis, and there is no loss of generality 
by insisting on the orthonormality of this basis. 
However, the non-trivial message of this theorem is  
this  orthonormal basis ($V$) can be chosen 
in such a way that its image $U$ under $A$ is also orthonormal 
(again, up to non-negative scalings). 
Furthermore, 
in a generic case (where all singular values are different)   
the singular value decomposition is unique, up to an arbitrary 
relabeling of the {\it basis-vectors} and a simultaneous sign-flip 
of corresponding columns in $U$ and $V$, i.e. 
$(U_\ell,V_\ell) \rightarrow (-U_\ell,-V_\ell) $  for any number of columns. 

The importance of the SVD result, and the starting point for this paper, is the following well-known minimisation result (more details can be found in ~\citep{golub2013matrix,eckart1936approximation}).

\begin{theorem}[{\bf Eckart-Young-Mirsky Theorem:  Optimal low rank approximation}]
\label{thm:rank_approx}
Let us consider an $n \times m$ matrix $A$ with rank $rk(A) = r \leq \min(n,m)$. 
For $k < r$,  finding the rank-$k$ matrix $A_k$ that is closest to $A$ in (Frobenius) norm gives rise to the following constrained minimisation problem: 
\begin{equation*}
    \min_{A_k} \n A - A_k\n^2    \quad \quad 
\mbox{subject to}  \quad rk(A_k) \leq  k.
\end{equation*}
The solution to this problem 
is obtained by truncating the SVD expansion eq.~\eqref{eq:svd_def_2} 
after the $k$-th largest singular value: 
\begin{equation}
     A_k  = \sum\limits_{i= 1}^k \sigma_i U_i V_i^T  
      =  U_{(1:k)} \> diag(\sigma_1, \ldots , \sigma_k) 
    \>  V_{(1:k)}^T. 
\end{equation}
\qed
\end{theorem}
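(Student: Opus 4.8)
The plan is to reduce the problem to a statement about singular values and then exploit the orthogonal invariance of the Frobenius norm. First I would fix the SVD $A = USV^T$ from Theorem~\ref{thm:svd_1} and let $A_k$ be any candidate matrix with $rk(A_k) \le k$. Since multiplication by orthonormal matrices preserves the Frobenius norm (because $\n U X V^T \n^2 = \tr(UXV^TVX^TU^T) = \tr(XX^T) = \n X \n^2$), I can write
\begin{equation*}
\n A - A_k \n^2 = \n U S V^T - A_k \n^2 = \n S - U^T A_k V \n^2,
\end{equation*}
and note that $B := U^T A_k V$ still has rank at most $k$. So the problem is equivalent to: among all matrices $B$ of rank $\le k$, minimise $\n S - B \n^2$ where $S$ is the (rectangular) diagonal matrix of sorted singular values. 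The natural guess is that the optimal $B$ is $\mathrm{diag}(\sigma_1,\dots,\sigma_k,0,\dots,0)$, for which the residual is $\sum_{i=k+1}^r \sigma_i^2$; transporting this $B$ back through $U(\cdot)V^T$ yields exactly the truncated expansion $A_k = U_{(1:k)}\,\mathrm{diag}(\sigma_1,\dots,\sigma_k)\,V_{(1:k)}^T$, so it suffices to prove this residual is a lower bound.

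For the lower bound I would argue that any rank-$k$ matrix $B$ must leave the ``top'' part of $S$ poorly approximated. The cleanest route is via the variational (Courant–Fischer / Weyl) characterisation of singular values: if $\sigma_i(\cdot)$ denotes the $i$-th largest singular value, then $\sigma_{i+j-1}(X+Y) \le \sigma_i(X) + \sigma_j(Y)$. Applying this with $X = S - B$, $Y = B$, $i$ arbitrary and $j = k+1$, and using $\sigma_{k+1}(B) = 0$ since $rk(B)\le k$, gives $\sigma_{i+k}(S) \le \sigma_i(S-B)$ for every $i\ge 1$. Hence
\begin{equation*}
\n S - B \n^2 = \sum_{i\ge 1}\sigma_i(S-B)^2 \ge \sum_{i\ge 1}\sigma_{i+k}(S)^2 = \sum_{i > k}\sigma_i(A)^2 = \sum_{i=k+1}^r \sigma_i^2,
\end{equation*}
which matches the value attained by the truncated SVD. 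Therefore the truncation is optimal, and equality forces $B$ (hence $A_k$) to be the stated matrix in the generic case of distinct singular values.

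The main obstacle is supplying the Weyl-type singular value inequality $\sigma_{i+k}(S) \le \sigma_i(S-B)$ rather than simply quoting it; one must either invoke the min-max characterisation $\sigma_i(X) = \min_{\dim \mathcal{W} = i-1}\ \max_{v \perp \mathcal{W},\,\n v\n=1}\n Xv\n$ and play a dimension-counting game with the null space of $B$, or argue directly by constructing a suitable $(k{+}1)$-dimensional subspace on which $B$ vanishes and $S$ acts with singular values at least $\sigma_{k+1}$. A lower-tech alternative that avoids Weyl's inequality entirely is to reason about the column (or row) space: if $W$ is the $(\le k)$-dimensional column space of $A_k$, then $\n A - A_k\n^2 \ge \n A - \Pi_W A \n^2$ where $\Pi_W$ is the orthogonal projector onto $W$, since $\Pi_W A$ is the best approximation to $A$ with column space in $W$; one then shows $\n A - \Pi_W A\n^2$ is minimised by taking $W = \mathrm{span}(U_1,\dots,U_k)$, which is a cleaner optimisation over $k$-dimensional subspaces. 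I would likely present the subspace-projection argument as the main line since it is the most self-contained, and mention the Weyl-inequality route as the slick variant.
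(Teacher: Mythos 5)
Your argument is correct, but there is nothing in the paper to compare it against: the paper states the Eckart--Young--Mirsky theorem as a known result and immediately closes it with \qed, deferring the proof to the cited references (Golub \& Van Loan; Eckart \& Young). Taken on its own merits, your main line is the standard Mirsky-style proof: orthogonal invariance of the Frobenius norm reduces the problem to approximating the rectangular diagonal matrix $S$ by a rank-$\le k$ matrix $B$, the truncation achieves residual $\sum_{i>k}\sigma_i^2$, and the Weyl inequality $\sigma_{i+j-1}(X+Y)\le\sigma_i(X)+\sigma_j(Y)$ applied with $j=k+1$ and $\sigma_{k+1}(B)=0$ gives the matching lower bound $\n S-B\n^2\ge\sum_{i>k}\sigma_i^2$. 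This is sound (and, as you hint, extends to all unitarily invariant norms, which is more than the Frobenius case needs). Two points remain to be filled in for a self-contained write-up, both of which you already flag: either a proof of the Weyl-type inequality via the min--max characterisation, or, in the projection variant, the Ky Fan-type fact that $\tr(\Pi_W AA^T)$ over $k$-dimensional subspaces $W$ is maximised by the span of the top $k$ left singular vectors. One minor overreach: your closing remark that equality ``forces'' $A_k$ to be the truncated SVD is a uniqueness claim the theorem does not make and your argument does not quite establish (the inequality chain shows the minimum value, not that only the truncation attains it); either drop it or restrict it explicitly to the case of distinct singular values with a short separate argument.
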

Recall that a rank-$k$ matrix of size $n\times m$ can 
always be written as a product $A_k = PQ^T$ where 
$P \in \R^{n \times k}$ and 
$Q \in \R^{m \times k}$ are matrices of full rank $k$. 
Again, in this factorisation, there is no loss of 
generality in requiring $Q^T Q = I_k$.  In fact, 
it is necessary to remove indeterminacy due to arbitrary but trivial rescalings such as  $P \longmapsto rP $ 
while $Q \longmapsto (1/r)Q$ (with $r\neq 0$), and the like.
Hence, one can reformulate Theorem~\ref{thm:rank_approx} as the  factorisation result in Theorem~\ref{thm:low_rank_factors}.
\begin{theorem}[{\bf PCA-type factorisation}]
\label{thm:low_rank_factors}
Assume that the $n \times m$ matrix $A$ has rank $rk(A) = r \leq \min(n,m)$. 
We now define the functional $G(P,Q)$ as follow:
\begin{equation} 
G(P,Q) = 
\n A - PQ^T \n^2
\label{eq:low_rank_factor_0}
\end{equation}
and the corresponding constrained optimisation problem:
\begin{equation} 
\min_{P, Q} G(P,Q)   \quad \quad 
\mbox{subject to}  \quad rk(P) = rk(Q) = k \quad 
\mbox{and} \quad  Q^TQ  = I_k
\label{eq:low_rank_factor_1}
\end{equation}
where $k < r$. A solution to the above constrained minimisation problem 
(in $P \in \R^{n \times k}$ and 
$Q \in \R^{m \times k}$)
is given by (using the SVD notation given above): 
\begin{equation} 
Q = V_{(1:k)}  \quad\quad 
\mbox{and } \quad \quad 
P = U_{(1:k)}\> diag(\sigma_1, \ldots , \sigma_k)
\label{eq:low_rank_factor_2}
\end{equation}
hence: 
\begin{equation}
PQ^T = \sum\limits_{i=1}^k \sigma_i U_i V_i^T.
\label{eq:low_rank_factor_3}
\end{equation}
From \eqref{eq:low_rank_factor_2} 
this it also follows that  $P^TP$ is diagonal, but not 
necessarily equal to the identity. 
\qed
\end{theorem}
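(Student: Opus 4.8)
The plan is to reduce this constrained factorisation problem to the Eckart-Young-Mirsky theorem (Theorem~\ref{thm:rank_approx}). The elementary bridge is that every matrix of the form $PQ^T$ with $P \in \R^{n\times k}$ and $Q \in \R^{m\times k}$ has rank at most $k$; consequently, for \emph{any} admissible pair $(P,Q)$ (regardless of the extra constraints $Q^TQ = I_k$ and $rk(P) = rk(Q) = k$),
\begin{equation*}
\n A - PQ^T\n^2 \ \geq\ \min_{rk(B)\leq k} \n A - B\n^2 \ =\ \n A - A_k\n^2 ,
\end{equation*}
where $A_k = U_{(1:k)}\,diag(\sigma_1,\ldots,\sigma_k)\,V_{(1:k)}^T$ is the truncated SVD supplied by Theorem~\ref{thm:rank_approx}. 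So it suffices to exhibit one admissible pair that attains this lower bound.

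First I would check that the proposed pair is admissible. With $Q = V_{(1:k)}$ one has $Q^TQ = V_{(1:k)}^T V_{(1:k)} = I_k$ by orthonormality of the columns of $V$, so in particular $rk(Q) = k$. With $P = U_{(1:k)}\,diag(\sigma_1,\ldots,\sigma_k)$ the $i$-th column of $P$ is $\sigma_i U_i$; since $k < r$ forces $\sigma_1,\ldots,\sigma_k > 0$ and the $U_i$ are orthonormal, these columns are linearly independent and $rk(P) = k$. (This is the only place the hypothesis $k<r$ enters.) Next, a direct multiplication gives $PQ^T = U_{(1:k)}\,diag(\sigma_1,\ldots,\sigma_k)\,V_{(1:k)}^T = A_k$, hence $\n A - PQ^T\n^2 = \n A - A_k\n^2$ meets the lower bound above. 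Therefore $(P,Q)$ is a global minimiser of \eqref{eq:low_rank_factor_1}, which proves \eqref{eq:low_rank_factor_2}--\eqref{eq:low_rank_factor_3}.

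For the closing remark I would compute $P^TP = diag(\sigma_1,\ldots,\sigma_k)\,U_{(1:k)}^T U_{(1:k)}\,diag(\sigma_1,\ldots,\sigma_k) = diag(\sigma_1^2,\ldots,\sigma_k^2)$, again using $U_{(1:k)}^T U_{(1:k)} = I_k$: this is diagonal, and coincides with $I_k$ only in the degenerate case $\sigma_1 = \cdots = \sigma_k = 1$. I would also note that the minimiser is not unique: for any $R \in {\cal O}(k)$ the pair $(PR, QR)$ is again admissible, because $(QR)^T(QR) = R^T I_k R = I_k$, and it leaves the product $(PR)(QR)^T = PQ^T$ unchanged; so the solution is pinned down only up to this right $O(k)$-action (plus the usual SVD sign/relabelling freedom when singular values coincide).

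The step that requires genuine care, rather than routine algebra, is the reduction in the first paragraph: one must ensure that the constraint $Q^TQ = I_k$ together with $rk(P)=rk(Q)=k$ is compatible with attaining the Eckart-Young optimum $A_k$. The inequality direction of the reduction is automatic, but its sharpness hinges on $A_k$ genuinely having rank $k$, which is exactly what $k<r$ (hence $\sigma_k>0$) secures; dropping that assumption would force one to restate the result with $rk(P),rk(Q)\le k$.
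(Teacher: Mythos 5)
Your proof is correct and follows essentially the same route as the paper, which presents this theorem as a direct reformulation of the Eckart--Young--Mirsky result: you reduce to Theorem~\ref{thm:rank_approx} via the observation that $PQ^T$ has rank at most $k$ and then exhibit an admissible pair attaining the bound. Your write-up is in fact more explicit than the paper's (which only sketches the equivalence), and your attention to where $k<r$ is needed and to the residual $O(k)$ non-uniqueness matches the paper's own remarks following the theorem.
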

Note that If we drop the insistence on the diagonal form 
    for $P^T P$  (i.e. $P$ need no longer be an orthogonal frame), then 
the solution is no longer unique.  Indeed, by taking any $ k \times k$ orthogonal matrix $R$ with 
$R^T R  = I_k = RR^T$, it is clear that $P' = PR$  
and $Q' = QR$ are also solutions. In this case: 
$Q'^T Q' = R^TQ^TQ R = I_k$ but $P'^T P' = R^T P^T P R 
= R^T (SS^T) R$ is in general a positive definite symmetric matrix.


\section{Regularisation for PCA-type 
factorisation}
\label{sct:pca_type_reg}

%

\subsection{Regularised PCA}
The following theorem outlines an obvious 
generalisation to the regularised version of the minimisation problem.
\begin{theorem}[{\bf Regularised PCA}]
\label{thm:rank_approx_reg}
Let $A$ be an $n\times m$ matrix of rank $r \leq \min(n,m)$.  For $k \leq r$,  let $P \in \R^{n\times k}$ and $Q \in \R^{m\times k} $ full rank matrices (i.e. of rank $k$).  Furthermore, 
for arbitrary (non-zero) integers $d$ and $g$
we introduce 
regularisation matrices $D \in \R^{d\times n}$ 
and $G \in \R^{g\times m}$,   
as well as  weights $\lambda,\mu \geq 0$. 

We now define the following functional $F$ in the 
variables $P$ and $Q$:
\begin{equation}
F(P,Q) =  \n A - PQ^T\n^2 + \lambda\, \n DP \n^2  
  + \mu\, \n GQ \n^2 
\label{eq:functional_factor_full} 
\end{equation}
and pose the corresponding {constrained} optimisation problem: 
\begin{equation}
\min_{P,Q}F(P,Q) \quad \quad 
\mbox{subject to} \quad \quad Q^TQ = I_k.
\label{eq:functional_2_lambda_mu} 
\end{equation}
Introducing short-hand notation 
$ L := D^T D \in \R^{n\times n}$ and 
$M := G^T G  \in \R^{m\times m}$ (both symmetric and positive semi-definite), the solution of the constrained 
optimisation problem \eqref{eq:functional_2_lambda_mu} is constructed as follows: 
\begin{itemize}
    \item The $k$ columns of the $m\times k$ matrix $Q$ are the eigenvectors of the $m\times m$ matrix:
    $$ K:= A^T(I_n + \lambda L)^{-1}  A  -  \mu \,M$$
    corresponding to the $k$ largest eigenvalues;
    \item Furthermore:  $P = (I_n + \lambda L)^{-1}  AQ$
\end{itemize}
\end{theorem}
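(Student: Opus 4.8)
The plan is to treat the optimisation in two stages: first profile out $P$ by unconstrained minimisation for each fixed $Q$, and then recognise the resulting reduced problem in $Q$ as a classical trace-maximisation over the Stiefel manifold $\{Q: Q^TQ = I_k\}$. Since $F$ is bounded below and, for each fixed $Q$, strictly convex in $P$, we will have $\min_{P,Q} F = \min_Q\big(\min_P F(P,Q)\big)$, so it suffices to compute the inner minimiser explicitly and then minimise the ``profile'' functional over $Q$.

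\textbf{Eliminating $P$.} Using \eqref{eq:Frobenius} to rewrite the Frobenius norms as traces and invoking the constraint $Q^TQ = I_k$ (which gives $\tr(P Q^TQ P^T) = \tr(P^TP)$), one expands
\[
F(P,Q) = \tr(A^TA) - 2\,\tr(Q^TA^TP) + \tr(P^TP) + \lambda\,\tr(P^TLP) + \mu\,\tr(Q^TMQ).
\]
Setting $N := I_n + \lambda L$, which is symmetric and positive definite (its eigenvalues are $\geq 1$ because $L$ is positive semi-definite and $\lambda \geq 0$, hence $N$ is invertible), the three middle terms combine into $-2\,\tr(Q^TA^TP) + \tr(P^TNP)$. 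For fixed $Q$ this is a strictly convex quadratic in $P$ whose gradient $-2AQ + 2NP$ vanishes uniquely at $P = N^{-1}AQ = (I_n + \lambda L)^{-1}AQ$, which is exactly the claimed formula for $P$.

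\textbf{Reducing to a trace problem in $Q$.} Substituting $P = N^{-1}AQ$ and using $N^{-1}NN^{-1} = N^{-1}$ collapses both $P$-dependent traces: $\tr(Q^TA^TP) = \tr(P^TNP) = \tr(Q^TA^TN^{-1}AQ)$, so
\[
\min_P F(P,Q) = \tr(A^TA) - \tr\big(Q^T[A^TN^{-1}A - \mu M]Q\big) = \tr(A^TA) - \tr(Q^TKQ),
\]
with $K = A^T(I_n + \lambda L)^{-1}A - \mu M$, which is symmetric because $N^{-1}$ and $M$ are. Minimising $F$ subject to $Q^TQ = I_k$ is therefore equivalent to maximising $\tr(Q^TKQ)$ over $Q^TQ = I_k$; by the Ky Fan / Courant--Fischer theorem this maximum equals the sum of the $k$ largest eigenvalues of $K$ and is attained by choosing the columns of $Q$ to be $k$ orthonormal eigenvectors of $K$ associated with those eigenvalues. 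Combined with $P = N^{-1}AQ$, this yields the asserted solution.

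\textbf{Main obstacle.} Each individual step is standard; the one place to be careful is the bookkeeping in the substitution stage --- grouping $\tr(P^TP) + \lambda\tr(P^TLP)$ into $\tr(P^T N P)$ and then using $N^{-1}NN^{-1} = N^{-1}$ is what makes the profile functional clean, and getting the regularisation terms to telescope correctly here is presumably exactly where \citep{he2019graph} slipped. One should also record that ``a solution'' is the correct phrasing: any orthogonal rotation of $Q$ within an eigenspace of $K$ (and simultaneous sign flips) gives a further optimum, so the minimiser is not unique in general.
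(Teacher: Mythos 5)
Your proof is correct and follows essentially the same route as the paper: eliminate $P$ via the first-order condition $(I_n+\lambda L)P = AQ$, substitute back to obtain the profile functional $\tr(A^TA) - \tr\bigl(Q^T K Q\bigr)$, and maximise the trace over $Q^TQ=I_k$ by taking eigenvectors of $K$ for the $k$ largest eigenvalues. Your bookkeeping via $N = I_n + \lambda L$ and the identity $N^{-1}NN^{-1}=N^{-1}$ is a slightly cleaner packaging of the paper's chain of trace manipulations, and the explicit appeal to Ky Fan and to strict convexity in $P$ adds rigor the paper leaves implicit, but the argument is the same.
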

For the sake of completeness, we reiterate that the condition 
$Q^T Q = I_k$ is not restrictive but necessary to eliminate 
arbitrary rescalings. 
In passing, we point out that result above corrects an error 
in \citep{he2019graph} where it is incorrectly stated that $P = AQ$.
%
%
\begin{proof}
\label{sct:main_proof}

Since the variable $P$ in the 
functional~\eqref{eq:functional_factor_full} 
in unconstrained, we can identify the optimum 
in $P$ (for fixed $Q$) by computing 
the gradient: 
\begin{eqnarray}
{\displaystyle \frac{1}{2} \nabla_P  F} &=& (PQ^T -A)Q + \lambda D^TD P
\end{eqnarray}
and solving for $P$:
\begin{equation}
\nabla_P F =0 \quad \Rightarrow \quad  P \underbrace{Q^TQ}_{I_k} - AQ + \lambda L P=0  \quad \Rightarrow \quad
    (I_k + \lambda L)P = AQ  .
    \label{eq_solution_P}
\end{equation}
This condition needs to hold at the solution point. 
By first re-writing $F(P,Q)$ formula as the trace of matrices and then plugging in~\eqref{eq_solution_P}, we have:
\begin{eqnarray*}
F(P,Q) &=& \tr\left[ (A-PQ^T)(A^T-QP^T) \right] + \lambda \> \tr(P^T{L}P) +  \mu \,\tr(Q^T M Q)\\
&=&\tr\, \left[ 
AA^T- AQP^T -P Q^TA^T +P Q^TQ P^T \right] + \lambda \> \tr (P^TLP)
+  \mu \,\tr(Q^T M Q)
\end{eqnarray*}
Considering the fact that the trace operator is invariant under transposition 
as well as cyclic permutation, 
and plugging in eq.~\eqref{eq_solution_P} we arrive at: 
\begin{eqnarray}
F(P,Q) &=& \tr\left[ AA^T -2(I_n+\lambda L)PP^T+PP^T \right] + \lambda \> \tr(P^TLP) +  \mu \,\tr(Q^T M Q) \nonumber \\
&=& \tr \left( AA^T - PP^T -2 \lambda L PP^T \right) + \lambda \> \tr(P^TLP) +  \mu \, \tr(Q^T M Q) \nonumber\\
&=& \tr(AA^T) - \tr(PP^T) -2 \lambda \> \tr(LPP^T) + \lambda \> \tr(P^TLP) +  \mu \,\tr(Q^T M Q) \nonumber \\
 &=& \tr(AA^T) - \tr(P^TP) -\lambda \> \tr(P^TLP)
    +  \mu \,\tr(Q^T M Q) \nonumber\\
    &=& \tr(AA^T) - \tr\left[ P^T \underbrace{(I_n+\lambda L)P}_{AQ}\right] +  \mu \,\tr(Q^T M Q).
\end{eqnarray}
Extracting $P$ and its transpose from eq.~\eqref{eq_solution_P}:  
\begin{equation}
    P=(I_n+\lambda L)^{-1}AQ \quad \Rightarrow  \quad 
    P^T = Q^TA^T (I_n+\lambda L)^{-1}\quad \quad \text{ as $L$ is symmetric}
    \label{eq_P}
\end{equation}
we arrive at:
\begin{equation}
    F(P,Q) = \tr(AA^T)-\tr\left[ Q^T \left(A^T(I_n+\lambda L)^{-1}A - \mu M \right)Q\right].
\end{equation}
Therefore, in order to minimize $F$, one must maximize the right-most term 
as $\tr(AA^T)$ is a constant.  
This is achieved by selecting for $Q$, eigenvectors corresponding to the 
$k$ largest eigenvalues of 
$(A^T(I_n+\lambda L)^{-1}A -\mu M)$. 
Once $Q$ is determined, $P$ is obtained via eq.~\eqref{eq_P}. 

As a concluding remark, we  point out that the matrix 
$I_n + \lambda L$ is always  invertible.
Indeed, since $L=D^TD$ is positive semi-definite and symmetric, it has a complete set of eigenvectors 
with corresponding non-negative eigenvalues, i.e., $L=W\Lambda W^T$, where $W$ is orthogonal 
(i.e. $W^T W = WW^T = I_n$) 
and $\Lambda \geq 0$. 
Hence, the matrix
$(I_n + \lambda L)$
has strictly positive diagonal elements, 
and is indeed invertible. 
\end{proof}
Some illustrative numerical experiments can be found~\citep{code_theorem_4}.

\subsection{Some special cases}
\label{sct:some_special_cases}

\begin{itemize}
    \item \framebox{$\lambda = 0$ and $\mu =0$}:\quad In that case, $Q$ comprises the first $k$ eigenvectors of 
    $  K = A^TA $ and  $P = AQ$, which means that 
    we end up with the standard SVD, as expected. Some 
    numerical experiments can be found~\citep{code_special_case_mu_0_lambda_0}. 
    
    
    \item \framebox{$D = I_n$ and $\mu = 0 $}:\quad The following section provides an overview of the results in~\citep{dumitrescu2017regularized} where a regularized K-SVD problem is addressed. 
    In the aforementioned paper, the authors consider a special case, where  $\mu = 0 $ and $D = I_n$.  Since this 
implies that $L = D^T D = I_n$ and $\mu M = 0$, the matrix $K$ simplifies 
to 
$$ K = \frac{1}{1+\lambda}\,A^TA $$
The eigenvectors of $K$ 
are therefore the right singular 
vectors of $A$ (i.e. the eigenvectors of $A^TA$). 
Hence $Q = V_{(1:k)}$, and as a result: 
$$  P = \frac{1}{1 + \lambda}AQ\quad and\quad AQ = U_{(1:k)}\, diag(\sigma_1, \ldots , \sigma_k) . $$

In particular, for $k=1$ (the rank-$1$ reconstruction), we obtain: 
$$ Q = \mathbf{v}_1  \quad \quad \mbox{and} \quad\quad 
P = \frac{\sigma_1}{1+\lambda} \,\mathbf{u}_1$$
which is the result that can be found in \citep{dumitrescu2017regularized}. The experiments are available in~\citep{code_special_case_mu_0_D_In}.


\end{itemize}

\section{Regularisation for SVD-type 
factorisation}
\label{sct:svd_type_reg}
We now turn our attention to the SVD-type factorisation  
which looks for an approximation of the form: 
\begin{equation*}
   A \approx PBQ^T  \quad\quad 
\mbox{subject to:}  \quad 
 \quad   Q^TQ = I_k, \quad  \,\n P_i\n=1 \,\,
 \mbox{$\forall i\in \{1,2,\ldots, k\}$, \, and  $B$ diagonal.} 
\end{equation*}
Loosely speaking, since the columns of $P$ and $Q$ are 
of unit length, they only pins down the structure of 
$A$, whereas  the diagonal matrix $B=diag(\beta_1, \beta_2, \ldots, \beta_k)$ captures the \emph{amplitude} 
of the corresponding structures. 
Similar to before, the columns of $Q$ are orthonormal, i.e., we again insist on $Q^TQ = I_k$.
However, unlike before, the columns of $P$ are now 
only required to have unit length. 
%

In light of the aforementioned SVD-type matrix factorisation technique, Theorems~\ref{thm_rsvd_mu_0} and~\ref{thm_rsvd_mu_not_0} provide an alternative solution to the lower-rank matrix approximation problem.
For notational convenience,  
Theorem~\ref{thm_rsvd_mu_0} first addresses the 
simplified case for $\mu=0$. 
Finally, in 
Theorem~\ref{thm_rsvd_mu_not_0} 
we return to the general case. 
\begin{theorem}[{\bf Regularised SVD}]
\label{thm_rsvd_mu_0}
Let $A$ be an $n\times m$ matrix of rank $r \leq \min(n,m)$.  For $k \leq r$,  let $P \in \R^{n\times k}$ and $Q \in \R^{m\times k} $ of rank $k$, 
while $B \in \R^{k\times k} $ diagonal 
(i.e.  $ B = diag(\beta_1, \beta_2, \ldots, \beta_k)  $).  Furthermore, 
for arbitrary non-zero integer $d$ 
we introduce 
regularisation matrix $D \in \R^{d\times n}$, 
as well as  weight $\lambda \geq 0$. 
Finally, we introduce the short-hand notation 
$ L := D^T D \in \R^{n\times n}$ 
%
(symmetric and positive-definite).
We are now in a position to define the following functional $F$ in the 
variables $P, Q$ and $B$:
\begin{equation}
F(P,Q, B) =  \n A - P B Q^T\n^2 + \lambda\, \n DP \n^2,   
\label{eq:functional_1} 
\end{equation}
and the corresponding constrained optimisation problem: 
\begin{equation}
\min_{P,Q, B}F(P,Q, B) \quad \quad 
\mbox{subject to:}  \quad 
 \quad Q^TQ = I_k, \quad  \,\n P_i\n=1 \,\,
\mbox{$\forall i\in \{1,2, \ldots,k\}$, \, and  $B$ diagonal.} 
\label{eq:functional_rsvd} 
\end{equation}
This problem is solved by the solution Algorithm~\ref{algorithm_mu_0}
specified below.  
\\
\begin{algorithm}[H]
\SetAlgoLined
\KwIn{$A,\> k,\> \lambda, \> D $}
\KwOut{$P,\> B,\> Q $}
 Initialization\\
 \While{no convergence}{
\begin{enumerate}
    \item Determine the $m\times k$ matrix $Q = [\mathbf{q}_1, \mathbf{q}_2, \ldots, \mathbf{q}_k]$ 
    (with orthonormal columns: $Q^TQ = I_k$) 
    such that the sum of the 
    smallest eigenvalue of each of the $k$ symmetric matrices 
    $S(\mathbf{q}_i) $ is minimal, i.e.:
    $$ \min_{Q} \psi(Q) =  \min_{Q} \sum_{i=1}^k  \lambda_1(\mathbf{q}_i) \quad\quad \mbox{such that } 
    Q^TQ=I_k$$
    
    where $\lambda_1(\mathbf{q}_i) = \min (eig(S(\mathbf{q}_i))$. To this end we use gradient descent (see Section~\ref{sct:comp_aspects}). 
    \item  For each $\mathbf{q}_i$ as determined above, take $\mathbf{p}_i$  
    to be the eigenvector $W_1(\mathbf{q}_i)$ corresponding 
    to the smallest eigenvector $\lambda_1(\mathbf{q}_i)$. 
    Construct the $n\times k$ matrix 
     $P = [\mathbf{p}_1, \mathbf{p}_2, \ldots, \mathbf{p}_k]$. 
    \item Finally, set $B = diag(\beta_1, \ldots, \beta_n)$ where  $\beta_i = (P^TAQ)_{ii}$.
\end{enumerate}
 }
 \caption{Proposed RSVD method ($\mu = 0$)}
 \label{algorithm_mu_0}
\end{algorithm}

    
    
    
    
    

\end{theorem}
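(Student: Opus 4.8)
The plan is to eliminate the two ``easy'' variables $B$ and $P$ in closed form, thereby reducing the joint minimisation to a single optimisation over the orthonormal frame $Q$. Writing $PBQ^T = \sum_{i=1}^k \beta_i\,\mathbf{p}_i\mathbf{q}_i^T$ and using $\n X\n^2 = \tr(XX^T)$, I would expand $F$. The constraint $Q^TQ = I_k$ makes all cross terms $\mathbf{q}_i^T\mathbf{q}_j$ ($i\neq j$) vanish, and $\n\mathbf{p}_i\n = 1$ collapses the remaining diagonal quadratic term; crucially, this separation does \emph{not} require $P^TP = I_k$. The result is
\begin{equation*}
F(P,Q,B) = \tr(AA^T) + \sum_{i=1}^k\Big[\,\beta_i^2 - 2\beta_i\,\mathbf{p}_i^TA\mathbf{q}_i + \lambda\,\mathbf{p}_i^TL\mathbf{p}_i\,\Big],
\end{equation*}
a sum of terms each involving only the triple $(\beta_i,\mathbf{p}_i,\mathbf{q}_i)$. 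Since $B$ need only be diagonal and $\n\mathbf{p}_i\n=1$ is imposed column by column, the minimisation over $(P,B)$ for a fixed $Q$ splits into $k$ independent subproblems.

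For fixed $\mathbf{p}_i,\mathbf{q}_i$ the scalar minimiser of the $i$-th bracket is $\beta_i = \mathbf{p}_i^TA\mathbf{q}_i = (P^TAQ)_{ii}$, which is step 3 of the algorithm. Substituting it back, the $i$-th bracket becomes the quadratic form $\mathbf{p}_i^T S(\mathbf{q}_i)\,\mathbf{p}_i$ of the symmetric matrix
\begin{equation*}
S(\mathbf{q}_i) := \lambda L - (A\mathbf{q}_i)(A\mathbf{q}_i)^T \in \R^{n\times n}.
\end{equation*}
By the Rayleigh--Ritz principle, minimising $\mathbf{p}_i^T S(\mathbf{q}_i)\mathbf{p}_i$ over $\n\mathbf{p}_i\n = 1$ yields the smallest eigenvalue $\lambda_1(\mathbf{q}_i)$ of $S(\mathbf{q}_i)$, attained at a corresponding unit eigenvector $W_1(\mathbf{q}_i)$ --- this is step 2, with $\lambda_1$ as defined in step 1. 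Hence $\min_{P,B}F(P,Q,B) = \tr(AA^T) + \psi(Q)$ with $\psi(Q) = \sum_{i=1}^k\lambda_1(\mathbf{q}_i)$, and so problem~\eqref{eq:functional_rsvd} is \emph{equivalent} to minimising $\psi$ over the Stiefel manifold $\{Q\in\R^{m\times k}:Q^TQ=I_k\}$ --- this is step 1. A minimiser exists because $\psi$ is continuous (eigenvalues depend continuously on matrix entries) and the Stiefel manifold is compact; the ``while'' loop merely wraps the iterative solver for step 1, after which steps 2--3 are deterministic.

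The hard part is genuinely the reduced problem over $Q$. In contrast to the PCA-type functional of Theorem~\ref{thm:rank_approx_reg}, where the \emph{unconstrained} $P$ could be substituted out to leave a plain symmetric eigenvalue problem with a closed-form optimal $Q$, here $\psi$ is neither convex nor everywhere differentiable (non-smoothness appears precisely where the bottom eigenvalue of some $S(\mathbf{q}_i)$ fails to be simple), so no closed form for $Q$ is available and one must perform gradient descent on the constraint manifold $Q^TQ=I_k$, as carried out in Section~\ref{sct:comp_aspects}. A minor loose end is the rank-$k$ requirement on $P$: the eigenvectors $W_1(\mathbf{q}_i)$ are linearly independent in the generic case, so this is a mild non-degeneracy assumption rather than a real restriction. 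As a sanity check I would verify the specialisation $\lambda = 0$: then $S(\mathbf{q}_i) = -(A\mathbf{q}_i)(A\mathbf{q}_i)^T$ has smallest eigenvalue $-\n A\mathbf{q}_i\n^2$, so $\psi(Q) = -\tr(Q^TA^TAQ)$, which is minimised by the top-$k$ right singular vectors of $A$; steps 2--3 then return $\mathbf{p}_i = \mathbf{u}_i$ and $\beta_i = \sigma_i$, recovering the truncated SVD of Theorem~\ref{thm:rank_approx}.
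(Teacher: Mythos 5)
Your proposal is correct and follows essentially the same route as the paper's own proof: eliminate $B$ via $\beta_i=(P^TAQ)_{ii}$, reduce each column of $P$ to a Rayleigh quotient of $S(\mathbf{q}_i)=\lambda L - A\mathbf{q}_i\mathbf{q}_i^TA^T$, and leave the residual minimisation of $\psi(Q)=\sum_i\lambda_1(\mathbf{q}_i)$ over the orthonormal frames to gradient descent. Your additions (existence via compactness of the Stiefel manifold, the non-smoothness caveat at eigenvalue crossings, and the $\lambda=0$ sanity check recovering the truncated SVD) are sound refinements not present in the paper.
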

\begin{proof}
Since $B$ is unconstrained, we can determine its optimal value 
by computing the derivative with respect to $B$ and 
equating it to zero:
\begin{equation}
\nabla_B F(P,Q,B) = \nabla_B  \n A - P B Q^T\n^2 . 
\label{eq:nabla_B_1}
\end{equation}
Expanding the norm in terms of a trace 
(cf. eq.~\eqref{eq:Frobenius}), 
and using the invariance of a trace under transposition, 
we arrive at (recall $Q^TQ = I_k$ ):
\begin{eqnarray} 
\n A - P B Q^T\n^2 & = & 
\tr\left[ (A - P B Q^T)(A^T - Q B P^T) \right]   \nonumber \\
&=& \tr (AA^T) - 2 \tr(AQBP^T)  + \tr(P B^2P^T)  \nonumber \\[1ex]
&=& \n A \n^2   - 2\, \mbox{\tr}(P^TAQB)  
+ \mbox{\tr}(B^2 P^TP)
\nonumber \\
& = & \n A \n^2   - 2\,\sum_{i=1}^k (P^TAQ)_{ii}\, \beta_i  
+ \sum_{i=1}^k (P^TP)_{ii}\,\beta_i^2 \quad \quad \text{($B$ is diagonal)} \\
& = & \n A \n^2   - 2\,\sum_{i=1}^k (P^TAQ)_{ii}\, \beta_i  
+ \sum_{i=1}^k \beta_i^2 \quad \quad \text{($\n P_i \n=1 \Rightarrow  (P^TP)_{ii} = 1$).}
\label{eq:functional_beta}
\end{eqnarray}
We therefore calculate the gradient of the functional $F$ with respect to $B$ as follow:
$$
\frac{\partial}{\partial \beta_i}\,  \n A - P B Q^T\n^2  = 
2\, ( \beta_i  -  (P^TAQ)_{ii}).  
$$
For given $P$ and $Q$, we find the optimal $B$ by insisting that 
the resulting gradient vanishes, which yields: 
\begin{equation}
    \beta_i = (P^TAQ)_{ii}  \quad \quad \forall i \in \{1,2, \ldots, k\}.
    \label{eq:beta_optimal}
\end{equation}
Plugging this optimal choice  back into eq.~\eqref{eq:functional_beta}
the functional~\eqref{eq:functional_1} simplifies to 
\begin{equation}
\n A - P B Q^T\n^2  =  \n A\n^2 - \sum_{i = 1}^k \beta_i^2 
\label{eq:beta_2}
\end{equation}
To recast eq.~\eqref{eq:beta_2} in terms of $P$ and $Q$ 
(in order to eliminate $B$), we observe that for an arbitrary matrix $H$ 
we have 
$ H_{ij} = \mathbf{e}_i^T H  \mathbf{e}_j $,   where 
$\mathbf{e}_i = (0,0,\ldots, 1, \ldots, 0)^T$ are the standard basis 
vectors. 
Hence,  using 
the fact 
that the diagonal of a matrix is unchanged under 
transposition,  we conclude that 
$$ \beta_i = 
\left\{  \begin{array}{lcl}
(P^TAQ)_{ii} &=&  \mathbf{e}_i^T P^T A Q\,\mathbf{e}_i = 
\mathbf{p}_i^T  A \mathbf{q}_i\\[2ex]
(Q^TA^TP)_{ii} &=&  \mathbf{e}_i^T Q^T A^T P \,\mathbf{e}_i = 
\mathbf{q}_i^T  A^T \mathbf{p}_i
\end{array}
\right.
$$
where $\mathbf{p}_i, \mathbf{q}_i$ are the $i$-th columns of $P$ and 
$Q$, respectively.  
i.e. 
 $P = [\mathbf{p}_1, \mathbf{p}_2, \ldots, \mathbf{p}_k]$  and 
$Q = [\mathbf{q}_1, \mathbf{q}_2, \ldots, \mathbf{q}_k]$. 
As a consequence, 
\begin{equation}
\sum_{i = 1}^k \beta_i^2 =    
\sum_{i = 1}^k  \mathbf{p}_i^T  A \mathbf{q}_i \, \mathbf{q}_i^T  A^T \mathbf{p}_i. 
\label{eq:beta_3}
\end{equation}
As a final step, we introduce the notation  $L = D^T D$ to recast the 
regularisation term as: 
\begin{equation}
    \n DP \n^2 = \tr(P^T L P) 
    = \sum_{i=1}^k  \mathbf{e}_i^T P^T L P\, \mathbf{e}_i 
    = \sum_{i=1}^k \mathbf{p}_i^T  L \, \mathbf{p}_i.
\label{eq:reg_1}    
\end{equation}
%
Plugging eqs.~(\ref{eq:beta_3}) and (\ref{eq:reg_1}) 
into eq.~(\ref{eq:functional_1}), we obtain the following simplified 
form for the functional $F$ (assuming that we eliminate $B$ 
by using its optimal value): 

\begin{equation}
F(P,Q) = \n A\n^2 + F_1(P,Q), \quad\quad 
\mbox{where} \quad \quad 
 F_1(P,Q) =
\sum_{i=1}^k  \mathbf{p}_i^T(\lambda L - A\mathbf{q}_i\mathbf{q}_i^TA^T)\mathbf{p}_i. 
    \label{eq:functional_2}
\end{equation}
Introducing the notation $ S(\mathbf{q}) := \lambda L  - A\mathbf{q}\mathbf{q}^TA^T  $, 
we conclude that  
$$ F_1(P,Q) = \sum_{i=1}^k  \mathbf{p}_i^T 
S(\mathbf{q}_i)\mathbf{p}_i. $$
Since each $S(\mathbf{q})$ is a symmetric matrix, it 
can be diagonalised with respect to  
an orthonormal basis, i.e. 
there is an orthogonal $n \times n $ 
matrix $W$ (with $W^T W = WW^T = I_n$) 
and a diagonal matrix 
$\Lambda = diag(\lambda_1, \ldots, \lambda_n)$  (ordered 
$\lambda_1 \leq \lambda_2 \leq \ldots \leq \lambda_n$), 
both depending on $\mathbf{q}$
such that 
$$ S(\mathbf{q}) = W(\mathbf{q}) \Lambda (\mathbf{q}) W(\mathbf{q})^T ,  $$
i.e. the columns of $W$ 
are the eigenvectors of $S(\mathbf{q})$, 
with the corresponding eigenvalues on the diagonal of $\Lambda$. 
By introducing the notation $\lambda_1(S(\mathbf{q}))$ to denote 
the smallest eigenvalue of $\Lambda(\mathbf{q})$,
we obtain the minimal value $\mathbf{p}_i^T S(\mathbf{q}_i) \mathbf{p}_i= \lambda_1(\mathbf{q}_i)$ 
when choosing $\mathbf{p}_i$ to be the (unit) eigenvector ($W_1(\mathbf{\mathbf{q}}_i)$)
corresponding to the smallest eigenvalue. 
As a consequence, the solution strategy boils down to steps in Algorithm~\ref{algorithm_mu_0}.

This choice of $P, Q$ and $B$ solves the constrained 
minimisation problem~\eqref{eq:functional_rsvd}.  
Notice that due to the fact that $P$ and $B$ matrices are determined after finding $Q$, this optimisation problem can essentially be translated into a search in the space of $Q$ matrices. 
Some illustrative numerical experiments are available at~\citep{code_theorem_5_6}.

\end{proof}
We conclude this section by giving a slightly more general 
version ($\mu \neq 0$) of the previous theorem, thus re-establishing the 
symmetry between $P$ and $Q$.  
\begin{theorem}[{\bf Regularised SVD, symmetric version}]
\label{thm_rsvd_mu_not_0}
Let $A$ be an $n\times m$ matrix of rank $r \leq \min(n,m)$.  For $k \leq r$,  let $P \in \R^{n\times k}$ and $Q \in \R^{m\times k} $ of rank $k$, 
while $B \in \R^{k\times k} $ diagonal 
(i.e.  $ B = diag(\beta_1, \beta_2, \ldots, \beta_k)  $).  Furthermore, 
for arbitrary non-zero integers $d$ and $g$
we introduce 
regularisation matrices $D \in \R^{d\times n}$, 
and $G \in \R^{g\times m}$,   
as well as  weights $\lambda, \mu \geq 0$. 
Finally, we introduce the short-hand notation 
$ L := D^T D \in \R^{n\times n}$ 
and $M := G^T G  \in \R^{m\times m}$ 
symmetric and positive-definite).
We are now in a position to define the following functional $F$ in the 
variables $P, Q$ and $B$:
\begin{equation}
F(P,Q, B) =  \n A - P B Q^T\n^2 + \lambda\, \n DP \n^2 
 + \mu\, \n GQ \n^2 
\label{eq:functional_1_full} 
\end{equation}
and the corresponding constrained optimisation problem: 
\begin{equation}
\min_{P,Q, B}F(P,Q, B) \quad \quad 
\mbox{subject to:}  \quad 
 \quad   Q^TQ = I_k, \quad  \,\n P_i\n=1, \,\,
 \mbox{$\forall i\in \{1,2,\ldots, k\}$ \, and  $B$ diagonal.}
\label{eq:functional_rsvd_1} 
\end{equation}
This problem is solved by the solution
specified in Algorithm~\ref{lag_mu_not_zero}.  
\end{theorem}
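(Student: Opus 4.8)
The plan is to mirror the proof of Theorem~\ref{thm_rsvd_mu_0} step by step, noting that the sole new ingredient --- the term $\mu\,\n GQ\n^2$ --- involves neither $P$ nor $B$ and therefore slots cleanly into the reduction already carried out. First I would eliminate $B$: since $\mu\,\n GQ\n^2$ is constant in $B$, the computation leading to eq.~\eqref{eq:beta_optimal} goes through verbatim, giving the optimal $\beta_i = (P^TAQ)_{ii} = \mathbf{p}_i^T A\mathbf{q}_i$, and back-substitution reproduces $\n A - PBQ^T\n^2 = \n A\n^2 - \sum_{i=1}^k \beta_i^2$ exactly as in eqs.~\eqref{eq:beta_2}--\eqref{eq:beta_3}.

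Next I would expand the new term column-wise in the manner of eq.~\eqref{eq:reg_1}: with $M = G^TG$ one has $\mu\,\n GQ\n^2 = \mu\,\tr(Q^TMQ) = \mu\sum_{i=1}^k \mathbf{q}_i^T M\mathbf{q}_i$. Combining this with eq.~\eqref{eq:functional_2} yields the reduced functional
\begin{equation*}
F(P,Q) = \n A\n^2 + \sum_{i=1}^k \Big[\, \mathbf{p}_i^T S(\mathbf{q}_i)\,\mathbf{p}_i + \mu\,\mathbf{q}_i^T M\mathbf{q}_i \,\Big], \qquad S(\mathbf{q}) := \lambda L - A\mathbf{q}\mathbf{q}^TA^T .
\end{equation*}
For fixed $Q$ each term $\mu\,\mathbf{q}_i^T M\mathbf{q}_i$ is a constant, so the minimisation over the unit-norm columns $\mathbf{p}_i$ is identical to the $\mu=0$ case: $\mathbf{p}_i^T S(\mathbf{q}_i)\,\mathbf{p}_i$ is a Rayleigh quotient, minimised by choosing $\mathbf{p}_i = W_1(\mathbf{q}_i)$, the unit eigenvector of $S(\mathbf{q}_i)$ associated with its smallest eigenvalue $\lambda_1(\mathbf{q}_i)$, with minimal value $\lambda_1(\mathbf{q}_i)$. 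Hence the problem collapses to the $Q$-only, Stiefel-constrained minimisation
\begin{equation*}
\min_{Q^TQ = I_k}\ \psi(Q), \qquad \psi(Q) := \sum_{i=1}^k \Big[\, \lambda_1(\mathbf{q}_i) + \mu\,\mathbf{q}_i^T M\mathbf{q}_i \,\Big],
\end{equation*}
after which $P$ and $B = diag\big((P^TAQ)_{11}, \ldots, (P^TAQ)_{kk}\big)$ are read off exactly as in steps~2--3 of Algorithm~\ref{algorithm_mu_0}; this is precisely the content of Algorithm~\ref{lag_mu_not_zero}.

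Finally I would address the minimisation of $\psi$. The quadratic summand contributes the elementary gradient $2\mu\,M\mathbf{q}_i$, while for the first summand I would invoke first-order eigenvalue perturbation (Hellmann--Feynman / the envelope theorem): at a $\mathbf{q}_i$ where $\lambda_1(\mathbf{q}_i)$ is simple, $\partial\lambda_1/\partial\mathbf{q}_i = -2\,(\mathbf{w}_1^T A\mathbf{q}_i)\,A^T\mathbf{w}_1$ with $\mathbf{w}_1 = W_1(\mathbf{q}_i)$; this gradient is then projected onto the tangent space of the Stiefel manifold $\{Q : Q^TQ = I_k\}$ and followed by the Lie-group-based gradient descent of Section~\ref{sct:comp_aspects}. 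The main obstacle is exactly this last point: $\lambda_1(\mathbf{q})$ is only piecewise smooth --- it fails to be differentiable precisely where the smallest eigenvalue of $S(\mathbf{q})$ is degenerate --- so the descent is in fact run on a locally Lipschitz objective, and one should argue (via a Danskin-type result) that the expression above furnishes a valid (sub)gradient off a negligible set. The $\mu$-term itself is harmless; it merely restores the cosmetic symmetry between the roles of $P$ and $Q$ in the functional, while the reconstruction of the two factors remains asymmetric --- an eigenvector computation for $P$ against a manifold search for $Q$.
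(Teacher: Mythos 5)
Your proposal is correct and follows essentially the same route as the paper's own proof: expand $\mu\,\n GQ\n^2$ column-wise as $\mu\sum_i \mathbf{q}_i^T M\mathbf{q}_i$, observe that this term is independent of $P$ and $B$ so the elimination of $B$ and the Rayleigh-quotient minimisation over each $\mathbf{p}_i$ carry over unchanged from Theorem~\ref{thm_rsvd_mu_0}, and reduce everything to the Stiefel-constrained minimisation of $\psi(Q)$ in eq.~\eqref{eq:min_psi}. Your closing remarks on the Hellmann--Feynman gradient of $\lambda_1$ and its non-differentiability at eigenvalue crossings go beyond what the paper addresses and are a worthwhile supplement to the gradient-descent discussion of Section~\ref{sct:comp_aspects}.
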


\begin{proof}
Using the notation introduced above and in Theorem~\ref{thm_rsvd_mu_0}, we see that 
$$ \n GQ \n^2 = \tr(Q^T M Q) 
= \sum_{i = 1}^k \mathbf{q}_i^T M  \mathbf{q}_i . $$
Hence, the functional (\ref{eq:functional_1_full}) can be recast as: 
\begin{equation}
F(P,Q) = \n A\n^2 + F_2(P,Q), \quad\quad 
\mbox{where} \quad \quad 
 F_2(P,Q) =
\sum_{i=1}^k  \mathbf{p}_i^T(\lambda L - A\mathbf{q}_i\mathbf{q}_i^TA^T)\mathbf{p}_i 
+ \mu\,\sum_{i = 1}^k \mathbf{q}_i^T M  \mathbf{q}_i  .
    \label{eq:functional_2_full}
\end{equation}
The minimum of each term in the first summation 
in $F_2$ 
is equal to the smallest 
eigenvalue $\lambda_1(S(\mathbf{q}_i))$. 
Finding the minimum for the constrained optimisation problem (\ref{eq:functional_rsvd_1}) therefore amounts to 
finding the minimum of the functional:
\begin{equation}
    \psi(Q) := \sum_{i = 1}^k  \left(\lambda_1(S(\mathbf{q}_i)) 
    + \mu \,\mathbf{q}_i^T M  \mathbf{q}_i) \right)  
\label{eq:min_psi}
\end{equation}
subject to the constraint $Q^T Q = I_k$. 
Therefore, the minimisation problem again calls for 
a minimisation in $Q$ space, as the optimal choice for $P$ (corresponding 
eigen-vectors) follows automatically. We therefore arrive 
at the following  Algorithm~\ref{lag_mu_not_zero}. Some illustrative numerical examples are available in~\citep{code_theorem_5_6}.
 
\end{proof}

\begin{algorithm}[H]
\SetAlgoLined
\KwIn{$A,\> k,\> \mu,\> \lambda, \> D, \> G$}
\KwOut{$P,\> B,\> Q $}
 Initialization\\
 \While{no convergence}{
  \begin{enumerate}
    \item Recall that for 
    any unit vector $\mathbf{q} \in \R^m$ we define $S(\mathbf{q}) = \lambda L - Aqq^TA^T$.  Since this is a symmetric 
    $n\times n$
    matrix, it has a complete set of eigenvectors and corresponding eigenvalues. 
    Denote the smallest eigenvalue of each $S(\mathbf{q}_i)$  as 
    $\lambda_1(S(\mathbf{q}_i))$.
    
    \item For a given  $m\times k$ matrix $Q = [\mathbf{q}_1, \mathbf{q}_2, \ldots, \mathbf{q}_k]$ 
    (with orthonormal columns: $Q^TQ = I_k$) 
    compute the functional: 
    $$ \psi(Q) := \sum_{i = 1}^k  \left(\lambda_1(S(\mathbf{q}_i)) 
    + \mu \,\mathbf{q}_i^T M  \mathbf{q}_i) \right)
    $$
    and use gradient descent (on the compact \textit{torus domain}, 
    see section~\ref{sct:comp_aspects}) to 
    find the minimum. 
   
    \item  For each $\mathbf{q}_i$ as determined above, take $\mathbf{p}_i$  
    to be the eigenvector $W_1(\mathbf{q}_i)$ corresponding 
    to the smallest eigenvector $\lambda_1(S(\mathbf{q}_i))$. 
    Construct the $n\times k$ matrix 
     $P = [\mathbf{p}_1, \mathbf{p}_2, \ldots, \mathbf{p}_k]$. 
     
    \item Finally, set $B = diag(\beta_1, \ldots, \beta_n)$ where  $\beta_i = (P^TAQ)_{ii}$.
\end{enumerate}

 }
 \caption{Proposed RSVD method ($\mu \neq  0$)}
 \label{lag_mu_not_zero}
\end{algorithm}


\section{Computational Aspects}
\label{sct:comp_aspects}
\subsection{Gradient Descent on the Unitary Domain}
From Algorithm~\ref{lag_mu_not_zero} it becomes clear that 
the full regularisation problem 
can be reduced to the simpler 
constrained minimisation problem 
detailed in eq.~(\ref{eq:min_psi}).  Since 
the $\psi$-functional is smooth 
on a compact domain,  
this minimum is guaranteed to 
exist and one can use gradient descent to 
locate it. However, gradient descent 
needs to respect the constraint $Q^T Q = I_k$. This can 
be achieved by applying orthogonal transformations 
to the current $Q$ matrix, as this will 
preserve orthonormality. Specifically, recall all 
orthogonal $m\times m$ matrices with determinant equal to 1 
(rather than $-1$),  
constitute a multiplicative group denoted as
$\mathcal{SO}(m)$ and formally  defined as: 

\begin{equation*}
    \mathcal{SO}(m) = \left\{  R \in \R^{m\times m} \given 
R R^T = I_m = R^T R, \quad \mbox{and} 
\quad \det (R) = 1
\right\}
\end{equation*}
It is then straightforward to check that 
for any  $R \in \mathcal{SO}(m)$, it holds that if 
$\bar{Q} = RQ$, 
the condition $Q^T Q = I_k$ implies that 
 $\bar{Q}^T\bar{Q} = I_k$.  It therefore follows 
 that we can generate the ``\textit{infinitesimal 
 variations}" needed to compute the gradient 
 $\nabla_Q \> \psi(Q)$ by applying  ``sufficiently 
 small'' orthogonal matrices to the 
 current value of $Q$. 
 More precisely, we draw on the fact that 
 $\mathcal{SO}(m)$ is actually a Lie-group \citep{iserles2000lie}
 and that therefore each $R \in \mathcal{SO}(m)$ 
 can be generated by exponentiating 
 an element from its Lie-algebra 
 $  so(m) = \left\{ K \in \R^{m \times m} \given K^T = -K\right\}$ (the skew-symmetric matrices):   
\begin{equation*}
    R = \exp(tK) \equiv I_m + tK + \frac{1}{2!}t^2K^2 +\ldots 
+  \frac{1}{n!}t^n K^n + \ldots  \quad\quad 
\mbox{(with $K^T = -K$)}
\end{equation*}
By choosing $t$ sufficiently small, one obtains an orthogonal 
transformation that is close to the identity $I_m$. 
Furthermore, it suffices to restrict the variations to 
orthogonal transformations that result from exponentiating 
a basis 
for the space of skew-symmetric matrices. 
Such a basis is provided by  
the $m(m-1)/2$
skew-symmetric matrices $K_{ij}$ 
(where $1 \leq i < j \leq m$) for which the 
matrix element $k,\ell$ is given by: 
\begin{equation*}
    K_{ij}(k,\ell) = \left\{
\begin{array}{rcl}
    1 & \mbox{if} & k = i, \, \ell = j  \\
    -1 & \mbox{if} & k = j,\, \ell = i  \\
    0 & & \mbox{otherwise}
\end{array}
\right.
\end{equation*}
Given the current value $Q_0$, 
we construct nearby 
values for $Q$  by looping over $K_{12}, 
K_{13}, K_{23}, \ldots etc$ 
and constructing the corresponding orthogonal matrices  
$R_{12}(t) = \exp(tK_{12}),\ldots, etc$. 
Denoting these ``\emph{infinitesimal}'' rotation 
matrices as $R_{\alpha}$ (where 
$\alpha = 1,\ldots, m(m-1)/2$),  
we see that the partial derivatives with 
respect to these rotations can be estimated 
as: 
\begin{equation*}
    \frac{ \partial \psi(Q)}{\partial R_\alpha} 
\approx \frac{\psi(R_\alpha(t) Q_0) - \psi(Q_0)}{t} 
\quad \quad \quad \quad \mbox{(for $t$ sufficiently small). }
\end{equation*}
From these results we can select the 
infinitesimal rotation that results in the 
steepest descent.  

Since computing  $\psi$ is computationally expensive 
(it requires determining eigenvalues) a viable alternative 
to computing the gradient, is random descent: 
generate random rotations 
(by exponentiating random skew matrices) and check
whether they result in a lower $\psi$-value.  As 
soon as one is found, proceed in that direction, and 
repeat the process.

\subsection{Illustrative example:  Smoothing a noisy matrix}

As common in the literature e.g.,~\citep{jin2015low, he2019graph, gavish2014optimal}, we start from the assumption that 
the $n\times m$ data matrix $A$ has a relatively smooth underlying structure 
that is corrupted by noise: 
$$  A = \mathbf{u} \mathbf{v}^T + \tau Z,  $$
where the $n\times m$ matrix $Z $ has independent standard 
normal entries, and $\tau$ controls the size of the noise. 

To recover the underlying "signals"  $ \mathbf{u}$ 
and  $\mathbf{v}$,  
we minimise the SVD-type regularisation 
functional~\eqref{eq:functional_1_full} 
where the smoothness of the result is enforced by 
using regularisation matrices $D$ and $F$ that 
extract the second derivative, i.e. 
$$D = F=
 \begin{bmatrix}
  -1 & 1 & 0 & & \cdots& & 0 \\
  1 & -2 & 1 & 0& \cdots&  & 0 \\
  0 & 1 & -2 & 1& 0 &\cdots & 0 \\
  0 & 0 & 1 & -2& 1 &\cdots & 0 \\
  0 & & \ddots&\ddots &\ddots&\ddots&0  \\
   \vdots& &\cdots &0&1&-2&1 \\
  0 &  &\cdots&    &0&1 & -1
 \end{bmatrix}
$$
A typical result 
for a rank-1 ($k=1$) approximation 
is depicted in Figure~\ref{fig:rsvd_exp_2}, and 
compared to the standard SVD solution. This illustrative example is available in~\citep{num_exp_sec}.


\begin{figure}[]
    \centering
   
      \includegraphics[width=0.45\textwidth]{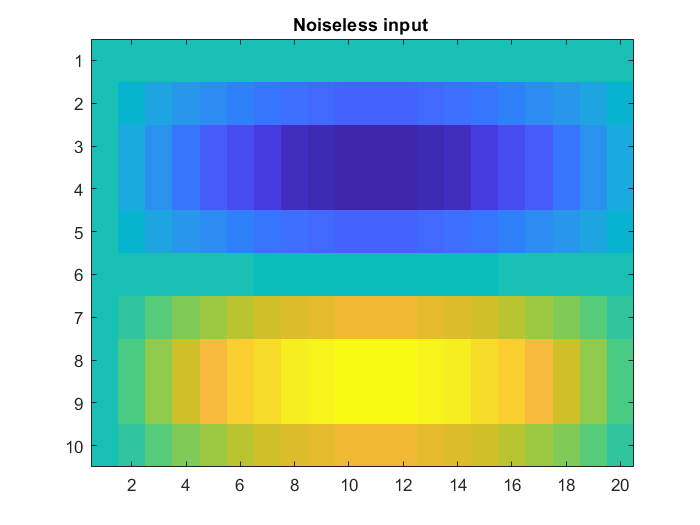}
    \includegraphics[width=0.45\textwidth]{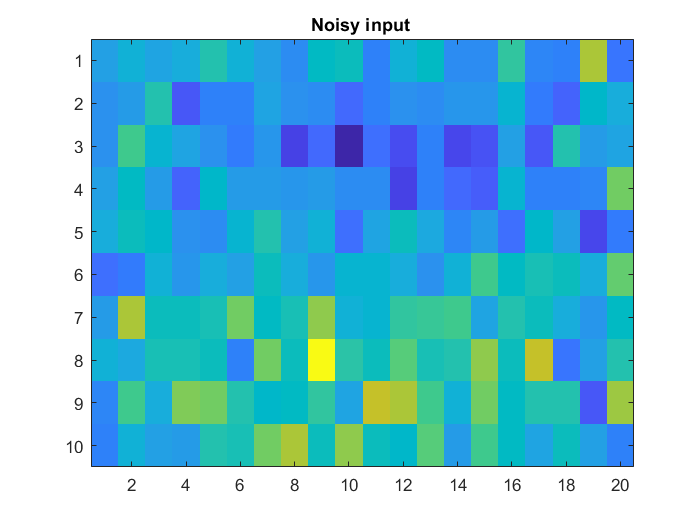}
     \includegraphics[width=0.45\textwidth]{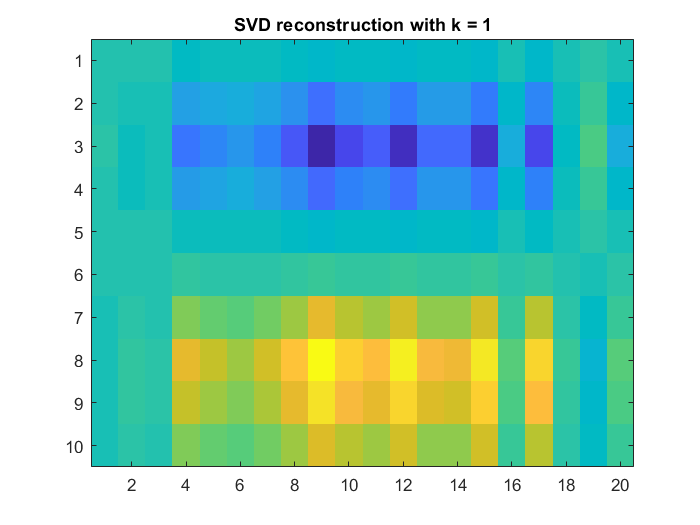}
      \includegraphics[width=0.45\textwidth]{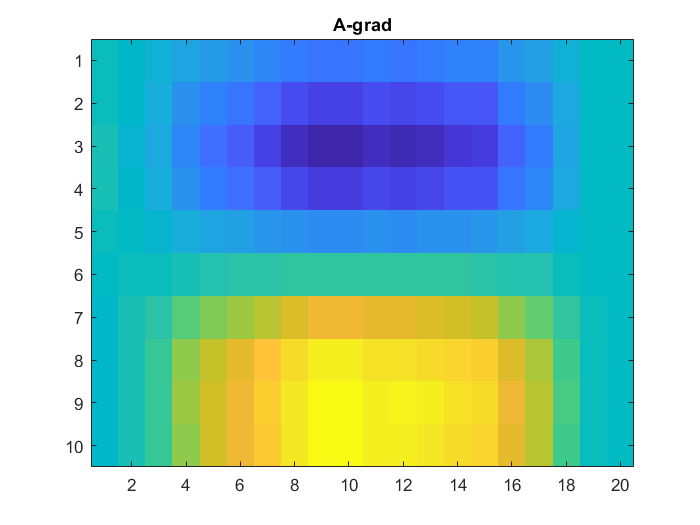}
       \includegraphics[width=0.45\textwidth]{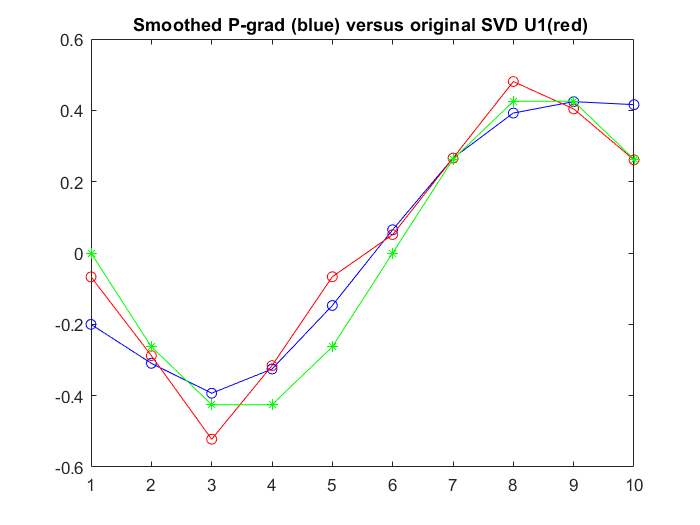}
    \includegraphics[width=0.45\textwidth]{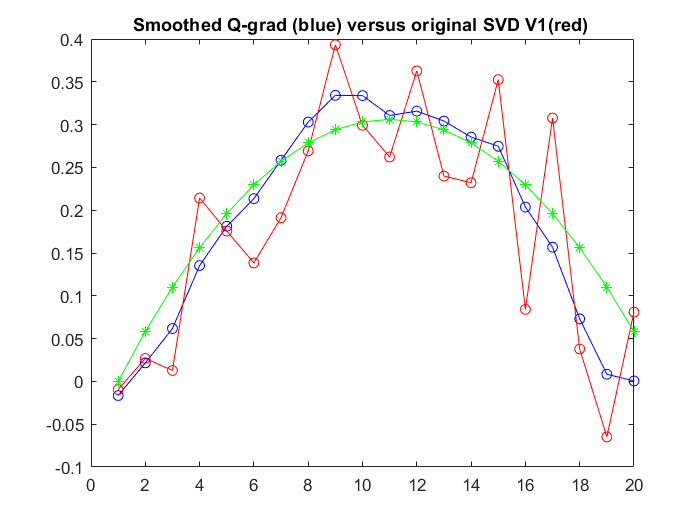}
    \caption{ 
    Reconstruction of noisy matrix based on RSVD. 
    Top left: noise-less rank-1 matrix $\mathbf{u} \mathbf{v}^T ,$ (image) ,  top right: noisy input image 
     $\mathbf{u} \mathbf{v}^T + \tau Z$
    (high noise level), Middle left: 
    standard rank-1 SVD reconstruction, middle right: RSVD 
    reconstruction (D and F are 2nd deriv matrices.  
    weight parameters $\lambda = \mu = 1.5$).
    Bottom: comparison of standard SVD $U(:,1)$ (red) versus $P$ (blue), 
    and $V(:,1)$ (red) (left) versus $Q$ (blue) (right). The actual  $\mathbf{u}$  and   $\mathbf{v}$ for the noiseless 
    input signal are drawn in green. 
    }
    \label{fig:rsvd_exp_2}
\end{figure}

\section{Conclusions and Future Research}
\label{sec:conclusion}

Singular Value Decomposition (SVD) and Principal Component 
Analysis (PCA) are important matrix factorisation techniques that underpin numerous applications. However, it is well-known that disturbances in the input (noise, outliers or  missing values) have a significant effect on the outcome. 
For that reason we investigate regularisation in two 
different but related versions of the factorisation, 
and detail the solution algorithms. 

An important topic for further research 
would be to find ways in which the gradient descent procedure 
in Algorithms~\ref{algorithm_mu_0} and 
 \ref{lag_mu_not_zero}
can be accelerated by taking advantage of the fact that 
the functional is very smooth and locally approximately 
quadratic. It would also be useful to derive some estimates 
for appropriate values for the weights $\lambda$ and 
$\mu$ in terms of noise characteristics corrupting the 
underlying signal. Finally, although the $P$ matrix in 
algorithm  \ref{lag_mu_not_zero} has unit-length columns, 
we were not able to prove that these columns are 
also orthogonal ($P^TP = I$) as is the case in standard SVD. 
In fact, numerical experiments seem to indicate that 
such a constraint is not compatible with minimisation 
of the functional.  This requires further theoretical 
elucidation.


\section*{Acknowledgment}
The authors gratefully acknowledge partial support by the Dutch NWO 
ESI-Bida project NEAT (647.003.002). 

\bibliographystyle{elsarticle-num}
\bibliography{main}
%
%

\end{document}